
\documentclass[letterpaper, 10 pt, conference]{ieeeconf}  
\usepackage{amsmath,amssymb,amsfonts}
\usepackage{fancyhdr}
\usepackage{tabularx}
\usepackage{multirow}
\usepackage{graphicx}    
\usepackage{array}
\usepackage{bm}
\usepackage{algorithm}
\usepackage{algorithmicx}
\usepackage{algpseudocode}
\usepackage{hyperref}
\newtheorem{theorem}{Theorem}


\newcommand{\norm}[1]{\left\lVert#1\right\rVert}
\algrenewcommand\algorithmicindent{1.0em}%

\IEEEoverridecommandlockouts                              

\overrideIEEEmargins                                      




\title{\LARGE \bf
DiNNO: Distributed Neural Network Optimization\\ for Multi-Robot Collaborative Learning
}

\author{Javier Yu,$^{1}$ Joseph A. Vincent,$^{1}$ Mac Schwager$^{1}$
\thanks{*This work was funded by NASA ULI grant 80NSSC20M0163, NSF NRI grant 1925030, and NSF NRI grant 1830402.  The first author was also supported on a NSF Graduate Research Fellowship, and the second author was supported on a Dwight D. Eisenhower Transportation Fellowship.}
\thanks{$^{1}$Department of Aeronautics and Astronautics, Stanford University, Stanford, CA 94305, USA, {\texttt\small \{javieryu, josephav, schwager\}@stanford.edu}}%
}

\begin{document}

\maketitle
\thispagestyle{empty}
\pagestyle{empty}

\begin{abstract}
We present a distributed algorithm that enables a group of robots to collaboratively optimize the parameters of a deep neural network model while communicating over a mesh network. Each robot only has access to its own data and maintains its own version of the neural network, but eventually learns a model that is as good as if it had been trained on all the data centrally. No robot sends raw data over the wireless network, preserving data privacy and ensuring efficient use of wireless bandwidth. At each iteration, each robot approximately optimizes an augmented Lagrangian function, then communicates the resulting weights to its neighbors, updates dual variables, and repeats. Eventually, all robots' local network weights reach a consensus.  For convex objective functions, we prove this consensus is a global optimum. We compare our algorithm to two existing distributed deep neural network training algorithms in (i) an MNIST image classification task, (ii) a multi-robot implicit mapping task, and (iii) a multi-robot reinforcement learning task.  In all of our experiments our method out performed baselines, and was able to achieve validation loss equivalent to centrally trained models.  See \href{https://msl.stanford.edu/projects/dist_nn_train}{https://msl.stanford.edu/projects/dist\_nn\_train} for videos and a link to our GitHub repository.
\end{abstract}

\section{Introduction}\label{Introduction}
A group of collaborating robots has the ability to explore, interact with, and experience their environment as a collective much faster than a single robot acting alone. This ability to rapidly gather a large volume and variety of data makes multi-robot systems especially well suited for tasks that involve training deep neural networks using data gathered by the robots. In a cloud robotics scenario, one can imagine thousands of robots networked over a cloud server, able to collectively gather  and process vast volumes of data for a common task (e.g. manipulation, autonomous driving, or human behavior prediction).  In a mesh network scenario, one can similarly imagine a team of robots collaborating to map an environment, learn a control policy, or learn to visually recognize threats in the environment. A central unsolved problem in collaborative robotics, therefore, is how to train neural network models on the robots through local communication such that each robot benefits from the data collected by the entire multi-robot system. 

\begin{figure}[t]
    \centering
    \includegraphics[width=\linewidth]{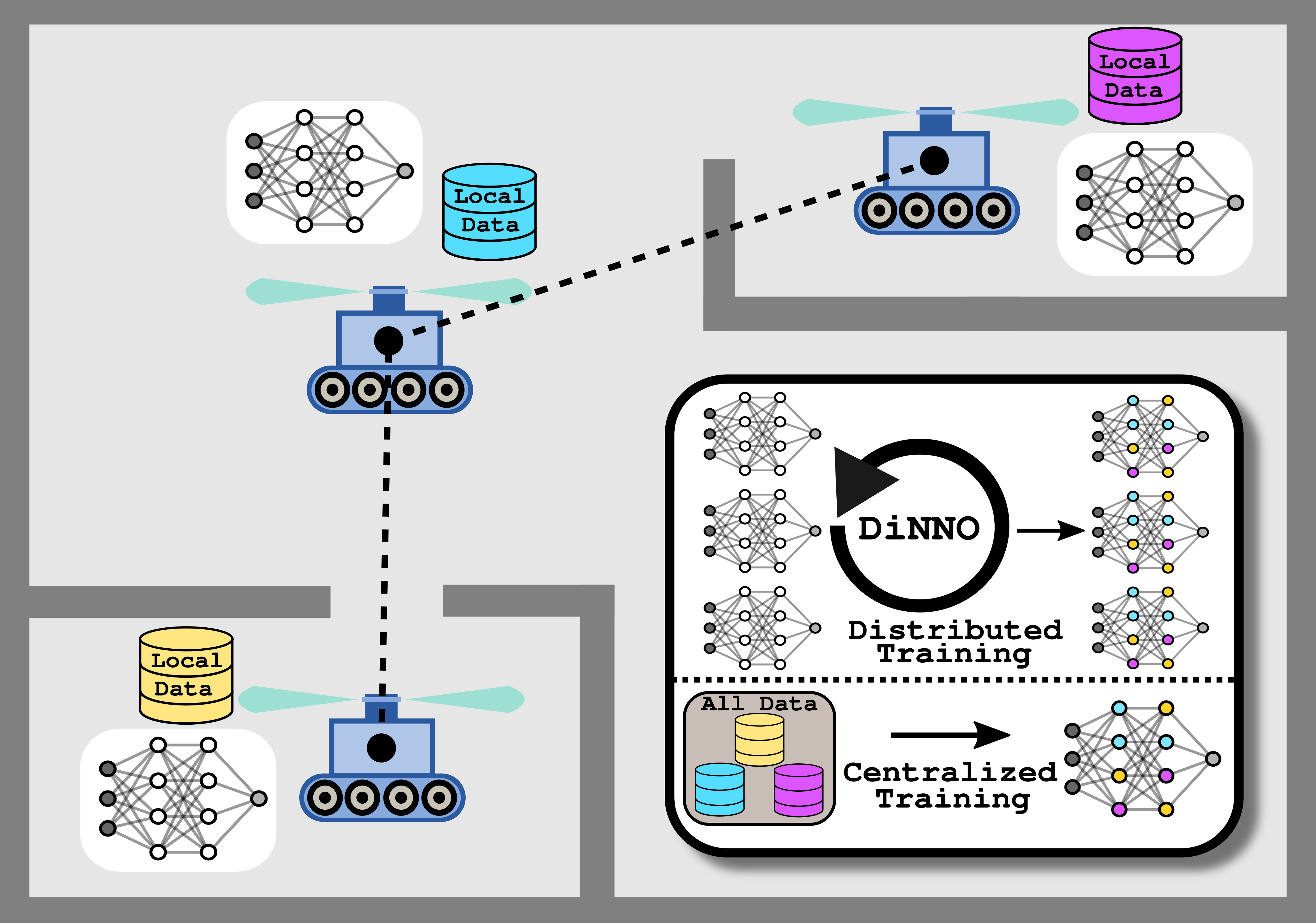}
    \caption{DiNNO allows robots to cooperatively optimize local copies of a neural network model without explicitly sharing data.  In this figure (representative of Section \ref{ex:implicit}), three robots use DiNNO to cooperatively optimize a building occupancy map represented as a neural network. Each robot only sees part of the building, collecting a local lidar data set (colored cylinders).  The robots communicate over a wireless network (dashed lines) to cooperatively optimize their local neural network copies.  The resulting model is as good as if it were trained centrally with all data at once.}
    \label{fig:abstract}
\end{figure}

To solve this problem, we propose Distributed Neural Network Optimization (DiNNO), an algorithm built on the alternating direction method of multipliers (ADMM) \cite{boyd2011distributed}, and that integrates easily with standard deep learning tool sets like PyTorch.  Using DiNNO, robots alternate between local optimization of an objective function, and communication of intermediate network weights over the wireless network.  The robots eventually reach a consensus on their network weights, with each robot learning a neural network that is as good as if it had been trained centrally with the data from all robots, as illustrated graphically in Figure \ref{fig:abstract}.  DiNNO inherits the strong convergence properties of ADMM---for convex objective functions we prove that all robots obtain globally optimal parameters.  However, neural network training is rarely convex.  Using standard deep learning tools within DiNNO we are able to retrieve state-of-the-art deep learning performance, but in a distributed, multi-robot implementation. DiNNO can accommodate time-varying communication graphs (e.g., due to robot motion), and also suits streaming data or online learning, if robots train their network while they move and collect data.  Finally, DiNNO operates by sharing network weights over the communication network, not raw data.  Therefore, robots using DiNNO preserve the privacy and integrity of their own local data set.  This is crucial in scenarios where user data or observations of humans are involved, or when different robot manufacturers need to preserve the privacy of their own data sets. 

A naive approach to solving the multi-robot deep learning problem is to use a mesh network routing protocol to aggregate the data gathered by all of the robots in the system to a single ``leader" robot which then optimizes a deep neural network model, and sends a copy of that trained model back to all of the other robots in the system. We refer to this approach as a ``centralized" solution, and it has a number of distinct drawbacks. First, depending on the size of the gathered data, algebraic connectivity of the communication graph, bandwidth of the communication links, and efficiency of the routing protocol it can take a significant amount of time to aggregate the gathered data at the leader node. A centralized approach is not robust to failure of the leader node, and there are some applications for which it may not be possible to transmit data due to privacy considerations, for instance, due to the European Union General Data Protection Regulation article 46 \cite{EU}.  DiNNO overcomes all of these limitations by enabling leaderless distributed neural network training through local communication among the robots. 

The paper is organized as follows. We give related work in Section \ref{Related Work} and introduce the distributed collaborative learning problem in Section \ref{Problem Formulation}. In Section \ref{Distributed Training} we derive DiNNO starting from a well-known variant of ADMM called Consensus ADMM. In Section \ref{Examples} we present three example robotic deep learning tasks that showcase our method.
\section{Related Work}\label{Related Work}
Learning has been used to address a variety of problems for multi-robot systems and is becoming more popular as deep learning tools are made more accessible. A deep learned controller is used to model multi-quadrotor interactions in \cite{shi2020neural} and \cite{shi2021neural}, and is shown to considerably outperform traditional non-linear controllers. Reinforcement learning can also be a useful tool in multi-robot contexts and \cite{chen2017decentralized} shows that an offline learned value function can be used to perform real-time and uncertainty aware collision avoidance. The collective training methods in \cite{yahya2017collective} and \cite{sartoretti2019distributed} demonstrate how experience aggregation from multiple robots can speed up policy optimization. While not deep learning, Gaussian processes are another popular learning tool that have been used for various regression tasks with data collected online by multi-robot systems \cite{luo2018adaptive, habibi2021human}. These works all showcase applicability of collaborative learning models for multi-robot systems, but in general, aside from \cite{habibi2021human}, do not provide a distributed framework from which to perform this learning. 

The problem of training neural networks in a distributed way using data aggregated from individual robots can be viewed as a specific instantiation of a more general class of problems referred to as distributed optimization problems. Distributed optimization is the study of algorithms for solving optimization problems where a sum of individual objective functions, which correspond in this case to the individual robots, is optimized using local computation and message passing. This formulation was first proposed in \cite{tsitsiklis1984problems}, and has been of renewed interest since the seminal work \cite{nedic2009distributed} which presented distributed subgradient descent for convex distributed optimization problems. Subsequent research has focused on improving convergence rates \cite{shi2015extra} and extending analysis to a broader range of problems including time-varying communication graphs \cite{nedic2014distributed}, and online or streaming convex objectives \cite{hosseini2016online}. For an overview of the broader distributed optimization literature we direct the reader to the surveys \cite{halsted2021survey, nedic2018network, yang2019survey}.

Our algorithm is designed for networked robots collaboratively optimizing a neural network model using data local to each robot.  Specifically, this task requires distributed optimization of nonconvex deep learning loss functions in situations where the communication graph can be time-varying, and where data may be streaming as it is collected by the robots. While, to our knowledge, no existing works formally address all of these requirements, there are some works in the distributed optimization literature that address general nonconvex distributed optimization objectives, and some use simple distributed neural network training problems as motivating examples.

In \cite{lian2017can} the distributed subgradient descent algorithm is extended to stochastic gradients as the distributed stochastic gradient descent (DSGD) algorithm, and uses training of a CIFAR-10 classification model as a benchmark problem. One approach to improving convergence rates in distributed optimization is to introduce an auxiliary variable that estimates the global gradient. Several works make use of this mechanism, and extend it to the domain of nonconvex optimization with stochastic gradients \cite{pu2021distributed, lu2019gnsd}. The Choco-SGD algorithm for distributed deep learning \cite{koloskova2019decentralized} is another algorithm similar to DSGD with the variations that it uses a gossip mechanism for improved consensus, and incorporates a quantization step for reducing communication bandwidth. The authors of \cite{di2016parallel} propose using local convex approximations, based on global gradient estimates found through gradient tracking, for distributed neural network optimization and show basic results on small networks. 

The edge consensus learning algorithm proposed in \cite{edge_consensus} is similar to our approach in that it is derived from ADMM, but instead of addressing the nonconvex primal update directly it uses a linearization similar to that proposed in \cite{ling2015dlm}, which results in a gradient descent like update. While still technically a primal-dual method, the update equations do not include a local optimization procedure, and are more similar to primal domain methods like DSGD (see \cite{ling2015dlm}, Remark 1). A number of other nonconvex distributed optimization methods are discussed in the survey \cite{chang2020distributed}. 

Our proposed algorithm for distributed multi-robot deep learning takes full advantage of existing deep learning tools and optimizers, and we demonstrate superior performance compared to benchmarks on robotics related deep learning tasks such as neural implicit mapping and deep multi-agent reinforcement learning. 

\section{Problem Formulation}\label{Problem Formulation}
We consider deep learning problems where portions of a data set, $\mathcal{D}$, are sent to, or collected by, $N$ robots that operate in a connected communication graph $\mathcal{G} = (\mathcal{V}, \mathcal{E})$. Let $\mathcal{D}_i$ be the portion of local data that belongs to robot $i \in \mathcal{V}$ where the union of all the local data sets, $\mathcal{D}_i$, is the joint data set, $\mathcal{D}$. In some cases $\mathcal{D}_i$ can represent access to a time-varying data set gathered from an incoming private data-stream (as in Section \ref{ex:implicit}). 

The model we would like to optimize has the form $y = f(x; \: \theta)$. Specifically we consider $f$ to be a deep neural network that implements a continuous function $f(x) : \mathbb{R}^{n} \rightarrow \mathbb{R}^{m}$ to give a map from inputs $x \in \mathbb{R}^n$ to outputs $y \in \mathbb{R}^m$. The neural network is parameterized by network weights $\theta \in \mathbb{R}^d$. We make no special assumptions about the architecture (e.g. feed-forward, convolutional, residual, etc.). We can then formalize our distributed learning optimization problem as
\begin{align}
    \underset{\theta \in \mathbf{R}^d}{\text{minimize}} \quad \sum_{i \in \mathcal{V}} \ell(\theta; \mathcal{D}_i) \label{eq:joint_loss}
\end{align}
where $\ell(\cdot)$ is the objective function (loss function) which is generally nonconvex and often nonsmooth (due to ReLU activation). Common deep learning tasks such as classification, regression, and unsupervised learning have different objective functions and a distributed deep learning optimizer should be general enough to achieve good performance across all of these problems.

Suppose that the decision variable, $\theta$, is separated such that each robot maintains their own instance of it, $\theta_i \in \mathbb{R}^d$. This yields the equivalent optimization problem
\begin{subequations}
\begin{align}
    \underset{\theta \in \mathbf{R}^d}{\text{minimize}}& \quad \sum_{i \in \mathcal{V}} \ell(\theta_i; \mathcal{D}_i) \\
    \text{subject to}& \quad \theta_i = \theta_j \quad \forall (i, j) \in \mathcal{E}. \label{eq:consensus}
\end{align}
\label{eq:main}
\end{subequations}
This optimization problem is amenable to a distributed solution in which robots minimize local objective functions, and take additional steps to come to agreement (consensus) on the value of the decision variable. Replacing the data defined loss functions in (\ref{eq:main}) with arbitrary objective functions yields the general formulation of a distributed optimization problem. Though we consider distributed deep learning which is typically unconstrained, a constrained formulation of (\ref{eq:main}) covers many other robotics problems including distributed target tracking, coordinated package delivery, and cooperative multi-robot mapping \cite{halsted2021survey}.

\section{Distributed Training}\label{Distributed Training}
A standard method for solving convex distributed optimization problems is the consensus alternating direction method of multipliers (CADMM). CADMM is an ADMM-based optimization method where compute nodes (robots) alternate between updating their primal and dual variables and communicating with neighboring nodes. To achieve a distributed primal-dual update, CADMM introduces auxiliary primal variables (i.e. $\theta_i = z_{ij}$ and $\theta_j = z_{ij}$ instead of $\theta_{i} = \theta_{j}$). CADMM works by first optimizing the auxiliary primal variables, followed by the original primal variables, then the dual variables, as in the original formulation of ADMM \cite{boyd2011distributed}. Implementations of CADMM then perform minimization with respect to the primal variables and gradient ascent with respect to the dual on an augmented Lagrangian that is fully distributed among the robots:
\begin{align}
    \mathcal{L}_a = \sum_{i \in \mathcal{V}} \ell(\theta_i) + p_i^{\top}\theta_i + \frac{\rho}{2} \sum_{j \in \mathcal{N}_i} \vert \vert \theta_i - z_{ij} \vert \vert _2 ^2
\end{align}
where $p_i$ represents the dual variable that enforces agreement between node $i$ and its communication neighbors, and $\mathcal{N}_i$ is the set of indices for neighboring nodes of $i$. The parameter $\rho$ that weights the quadratic terms in $\mathcal{L}_a$ is also the step size in the gradient ascent of the dual variable. Furthermore, the algorithm can be simplified by noting that the auxiliary primal variable update can be performed implicitly ($z_{ij} = \frac{1}{2}(\theta_i + \theta_j)$). Initializing the dual variables at zero then yields the following distributed update equations for CADMM:
\begin{subequations}
\begin{align}
       p_i^{k + 1} &= p_i^{k} + \rho \sum_{j \in \mathcal{N}_i} (\theta_i^{k} - \theta_j^{k}) \label{eq:dual}\\
       \theta_i^{k + 1} &= \underset{\theta}{\text{argmin}} \: \: \ell (\theta; \mathcal{D}_i) + \theta^{\top}p_i^{k+1} \label{eq:primal}\\
       & \qquad \qquad + \rho \underset{j \in \mathcal{N}_i}{\sum}\norm{\theta - \frac{\theta_i^{k} + \theta_j^{k}}{2}}_2^2. \nonumber
\end{align}
\end{subequations}
Typically, the primal variables are initialized uniformly to an initial guess $\theta_i^0 = \theta_{\text{initial}}$. This derivation of CADMM is addressed in much finer detail in \cite{chang2014multi}.

In our multi-robot learning problem, the objective function for the primal update (\ref{eq:primal}) if performed by each robot using CADMM is composed of three terms: a neural network loss on the robot's local data, a linear term from the dual variable, and a regularization term. It is obvious that applying CADMM directly to the neural network training problem results in intractable primal updates due to the neural network loss component. The key insight, which we use in our algorithm DiNNO, is that this primal optimization can be performed approximately, stopping well before convergence to a local optimum. Formally, we propose replacing the exact minimization of the primal update, (\ref{eq:primal}), with an approximate solution found by taking a small number of steps, $B$ (typically between 2 and 10), of a stochastic mini-batch first order method (SFO) on the entire primal objective function. Our proposed algorithm is shown in Algorithm \ref{alg:ours} with the approximate primal update performed in lines 12 - 16. We replace the current primal iterate with $\psi$ in order to avoid including two iteration count super scripts, and let $G(\psi^\tau;\rho, p_i^{k + 1}, \theta_i^k, \{\theta_j^k\}_{j \in \mathcal{N}_i}, \mathcal{D}_i)$ represent the step taken by a SFO on the objective in the primal update. To be clear, $G$ computes a stochastic gradient from a mini-batch in $\mathcal{D}_i$ not a gradient on the entire local data set.

In some of our experiments we found it beneficial to also add a ``scheduled" increase (Algorithm \ref{alg:ours}, line 9) for the penalty parameter $\rho$ in similar fashion to the learning rate schedules used in deep learning. For notational simplicity, we overload the variable $\rho$ to also mean this schedule of parameter values, and make explicit note of all cases where one is used. Although generally we leave this term constant, it can be useful to gradually increase it when faster consensus is desired. This schedule can be provided to robots prior to optimization, and does not compromise the distributed nature of the DiNNO algorithm.

An added benefit of our approach for distributed deep learning is that it pairs well with existing deep learning libraries (e.g. PyTorch \cite{pytorch}) because the approximate primal minimization can be performed with minimal changes to the typical training loops used to optimize individual neural networks. We find that this is beneficial because automatic differentiation and state-of-the-art neural network optimizers, like Adam \cite{kingma2014adam}, can be used to perform the approximate primal update, and practitioner knowledge from experience training individual neural networks is transferable.

\begin{algorithm}
\caption{{\small Distributed Neural Network Optimization (DiNNO)}}
\label{alg:ours}
\begin{algorithmic}[1]
\State \textbf{Require:} $\ell(\cdot), \:\theta_{initial}, \: \mathcal{G}, \: \mathcal{D}, \: \rho$
    \For{$i \in \mathcal{V}$} \Comment{Initialize the iterates}
        \State $p_i^{0} = 0$ \Comment{Dual variable}
        \State $\theta_i^{0} = \theta_{initial}$ \Comment{Primal variable}
    \EndFor
    \State 
    \For{$k \gets 0 \text{ to } K$} \Comment{Main optimization loop}
        \State \textbf{Communicate:} send $\theta_i^{k}$ to neighbors $\mathcal{G}$
        \For{$i \in \mathcal{V}$} \Comment{In parallel}
            \State $p_i^{k + 1} = p_i^{k} + \rho \sum_{j \in \mathcal{N}_i} (\theta_i^{k} - \theta_j^{k})$
            \State $\psi^{0} = \theta_i^k$
            \For{$\tau \gets 0 \text{ to } B$} \Comment{Approximate primal}
                \State $\psi^{\tau + 1} = \psi^{\tau} + G(\psi^{\tau};\rho, p_i^{k+1}, \theta_i^k, \{\theta_j^k\}_{j \in \mathcal{N}_i}, \mathcal{D}_i)$ \label{alg:line:primal}
            \EndFor
            \State $\theta_i^{k+1} = \psi^{B}$ \Comment{Update primal}
        \EndFor
    \EndFor  
    \State \Return{$\{\theta_i^{K}\}_{i \in \mathcal{V}}$}
\end{algorithmic}
\end{algorithm}

\subsection{Convergence Properties}
\begin{theorem}[Optimality of Algorithm \ref{alg:ours}]
Let each local objective function $l(\theta, \mathcal{D}_i)$ be strongly convex and $L$-smooth. Furthermore, let 
\begin{align*}
    G = -\frac{1}{L} \nabla_\theta \big(l(\theta, \mathcal{D}_i) + \theta^{\top}p_i^{k+1} + \rho \underset{j \in \mathcal{N}_i}{\sum}\norm{\theta - \frac{\theta_i^{k} + \theta_j^{k}}{2}}_2^2\big).
\end{align*}
Then Algorithm \ref{alg:ours} converges to the unique global solution with linear convergence rate.
\end{theorem}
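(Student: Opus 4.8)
\emph{Proof strategy.} The idea is to view Algorithm~\ref{alg:ours} as \emph{exact} consensus ADMM (CADMM), i.e.\ \eqref{eq:dual}--\eqref{eq:primal}, in which the primal minimization \eqref{eq:primal} is replaced by an approximate solve, and then to bound that approximation error tightly enough that the existing linear-convergence theory for inexact CADMM applies. Fix an outer iteration $k$ and a robot $i$, and write $\phi_i^k(\theta) := \ell(\theta;\mathcal{D}_i) + \theta^{\top}p_i^{k+1} + \rho\sum_{j\in\mathcal{N}_i}\norm{\theta - \tfrac12(\theta_i^k+\theta_j^k)}_2^2$ for the primal objective appearing in \eqref{eq:primal}, with exact minimizer $\theta_i^{k+1,\star} := \text{argmin}_\theta\,\phi_i^k(\theta)$; Algorithm~\ref{alg:ours} instead outputs $\theta_i^{k+1}=\psi^{B}$. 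Since $\ell(\cdot;\mathcal{D}_i)$ is $\mu$-strongly convex and $L$-smooth, $\phi_i^k$ is $(\mu+2\rho|\mathcal{N}_i|)$-strongly convex and $(L+2\rho|\mathcal{N}_i|)$-smooth, so $\theta_i^{k+1,\star}$ is unique; and, for the fixed step size $1/L$ (assuming $\rho$ is small enough relative to $L$ and the maximum node degree that this step size remains stable for $\phi_i^k$), the gradient map $\psi\mapsto\psi-\tfrac1L\nabla\phi_i^k(\psi)$ is a contraction with some modulus $\gamma\in(0,1)$. Consequently the inner loop of Algorithm~\ref{alg:ours} yields $\norm{\theta_i^{k+1}-\theta_i^{k+1,\star}}\le\gamma^{B}\norm{\theta_i^k-\theta_i^{k+1,\star}}$: the primal error is controlled by the ``warm-start'' distance from the previous iterate to the current subproblem's minimizer.

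I would then close the recursion. From the KKT conditions of \eqref{eq:primal}, $\theta_i^{k+1,\star}$ depends Lipschitz-continuously on $(p_i^{k+1},\theta_i^k,\{\theta_j^k\}_{j\in\mathcal{N}_i})$, and at the unique primal--dual solution $(\theta^\star,\{p_i^\star\})$ of \eqref{eq:main} the subproblem minimizer equals $\theta^\star$; hence the warm-start distance $\norm{\theta_i^k-\theta_i^{k+1,\star}}$ is at most a constant times the optimality/consensus residual $V^k$ of the current iterates $(\{\theta_i^k\},\{p_i^k\})$. Substituting this into the standard energy-function argument that gives exact CADMM a linear rate $\delta\in(0,1)$ under strong convexity and $L$-smoothness, the inexactness contributes an additional term proportional to $\gamma^{B}V^k$, producing a recursion $V^{k+1}\le(\delta+c\,\gamma^{B})\,V^k$ with $c$ independent of $k$; for the stated choice of $\rho$ and $B$ the composite factor $\delta+c\,\gamma^{B}$ is strictly less than one, so $(\{\theta_i^k\},\{p_i^k\})$ converges linearly and every $\theta_i^k\to\theta^\star$, the unique minimizer of \eqref{eq:joint_loss}. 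Equivalently, the geometric error bound above is precisely the kind of inexact-solve condition under which consensus ADMM with approximate primal updates is already proven to converge linearly (cf.\ \cite{chang2014multi}), so it suffices to verify that the inner loop of Algorithm~\ref{alg:ours} meets that condition.

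\emph{Main obstacle.} The crux is the quantitative bookkeeping in the second step. The inner-loop error is measured against the moving target $\theta_i^{k+1,\star}$, while the ADMM energy function measures distance to the fixed global solution, so these must be related, and then one must certify $\delta+c\,\gamma^{B}<1$. This is exactly where the coupling between $\rho$ --- which enters $\delta$, the conditioning of $\phi_i^k$, and the stability of the step size $1/L$ --- and the number of inner steps $B$ has to be made explicit; a fully rigorous version likely requires an upper bound on $\rho$ (relative to $L$ and the maximum degree of $\mathcal{G}$) and possibly a lower bound on $B$, conditions the theorem statement leaves implicit.
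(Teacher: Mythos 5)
Your skeleton is the same as the paper's: both arguments reduce Algorithm \ref{alg:ours} to consensus/decentralized ADMM and lean on its known linear convergence under strong convexity and $L$-smoothness, together with linear convergence of the inner gradient loop. But the paper's actual proof is far shorter and less ambitious than what you set out to do: it observes that with $G$ the exact full-batch gradient step the inner loop is gradient descent on a strongly convex, smooth subproblem and hence converges linearly to the subproblem minimizer \cite{gd_convergence}, and then, ``given globally optimal primal updates,'' identifies Algorithm \ref{alg:ours} with the decentralized ADMM analyzed in \cite{convergence} and invokes its linear rate. In other words, the paper does not track the finite-$B$ inexactness at all --- it implicitly passes to the exact-minimization limit of the primal update (\ref{eq:primal}). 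Your proposal is the honest version of that argument: CADMM with an inexact primal solve whose error is $\gamma^{B}$ times the warm-start distance, propagated through an ADMM energy recursion.

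The gap is that the decisive quantitative step is only announced, not carried out, and you say so yourself: you do not establish the Lipschitz bound relating $\norm{\theta_i^k-\theta_i^{k+1,\star}}$ to the residual $V^k$, nor the perturbed recursion $V^{k+1}\le(\delta+c\,\gamma^{B})V^k$, nor conditions on $\rho$ and $B$ under which $\delta+c\,\gamma^{B}<1$; with $B$ as small as $2$ and no restriction on $\rho$ (the theorem imposes none), that inequality cannot be certified in general, so as written your argument is a program rather than a proof. Your step-size concern is also genuine: the subproblem objective is $(L+2\rho|\mathcal{N}_i|)$-smooth, so the prescribed step $1/L$ is only a contraction when $\rho$ is small relative to $L$ and the node degree --- a caveat the paper's one-line appeal to \cite{gd_convergence} silently ignores. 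So you correctly identified the structure of the published argument and, in fact, the exact points where it is loose; the paper closes the proof only by treating the primal updates as exact, a simplification your more rigorous route deliberately refuses and therefore cannot complete without additional assumptions.
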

\begin{proof}
Given strongly convex and $L$-smooth local objectives, each primal update (Algorithm \ref{alg:ours} line \ref{alg:line:primal}) converges to the global solution with a linear convergence rate as shown in \cite{gd_convergence}. Given globally optimal primal updates and the stated assumptions, Algorithm \ref{alg:ours} is a special case of the decentralized ADMM algorithm studied in \cite{convergence} where it was shown to have linear convergence to the global solution.
\end{proof}

Clearly, in deep learning problems global solutions and linear convergence rates are not ensured due to nonlinear neural networks creating nonconvex, and often nonsmooth, objective functions. Nevertheless, we find that in practice Algorithm \ref{alg:ours} performs well for solving distributed deep learning problems, and converges to solutions similar in quality to those found by centralized optimization.

\subsection{Baseline Algorithms}
In Section \ref{Examples} we show that DiNNO is an extremely effective method for distributed training of neural network models. We compare DiNNO against two other commonly referenced stochastic first order distributed optimization methods: distributed stochastic gradient descent (DSGD) \cite{lian2017can}, and distributed stochastic gradient tracking (DSGT) \cite{pu2021distributed}. Like DiNNO, both methods have each robot maintain a local copy of the optimization variable (neural network weights), and use message passing and locally computed stochastic gradients to collaboratively optimize the neural network. DSGD uses the following simple update equation
\begin{align}
    \theta_i^{k + 1} = \underset{j \in \mathcal{V}}{\sum} w_{ij} \theta_j^{k} - \alpha^{k} g(\theta_i^{k}) \label{eq:dsgd}
\end{align}
where $w_{ij}$ is an element of a doubly stochastic matrix $W$ that has a sparsity pattern matching that of the graph Laplacian of $\mathcal{G}$, $\alpha^{k}$ is a decaying step size, and $g(\theta_i^{k})$ is a stochastic (or mini-batch) gradient of $\ell (\theta_i^k; \mathcal{D}_i)$. While (\ref{eq:dsgd}) may not at first appear to be a distributed algorithm, the sparsity pattern of $W$ means that each node only needs $\theta_j^k$ from its immediate neighbors to compute its update step.

The updates for DSGT are similar to those of DSGD, but an additional auxiliary variable is added to estimate the gradient of the joint loss:
\begin{align}
   \theta_i^{k+1} &= \underset{j \in \mathcal{V}}{\sum} w_{ij} (\theta_j^k - \alpha y_j^k) \\ 
   y_i^{k+1} &= \underset{j \in \mathcal{V}}{\sum} w_{ij} y_j^k + g(\theta_i^{k + 1}) - g(\theta_i^k).
\end{align}
An important point to note is that for DSGT the message size sent at each communication round is double that of both DSGD and DiNNO which only send $\theta_i^k$. For DSGT and DSGD we use the Metropolis-Hastings weights as $W$.

Alternative benchmark algorithms include (\cite{lu2019gnsd, koloskova2019decentralized, di2016parallel, edge_consensus, chang2020distributed}) but, in general, they share many core characteristics with the proposed baselines DSGT and DSGD.

\begin{figure*}[t]
    \includegraphics[width=\textwidth]{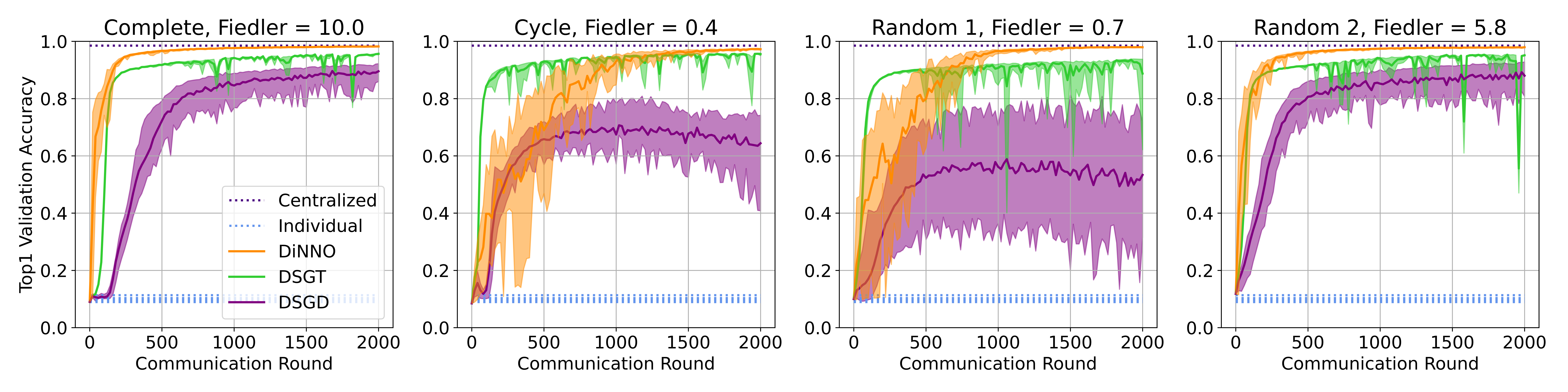} 
    \centering
    \caption{Each plot shows the Top-1 accuracy of the neural network models on the validation set of the MNIST problem. For each algorithm we evaluate the local neural networks stored by each robot on the validation set. Solid lines show the average validation loss across all robots at the current communication round, and filled areas are upper and lower bounded by the best and worst performing networks for each particular algorithm. The four plots show algorithm performance on four different communication graphs (using the same hyperparameters across all instances). In all cases our method, DiNNO, converges to the same accuracy as the centralized solution, and outperforms both of the baseline methods.}
    \label{fig:mnist}
\end{figure*}

\subsection{Data Distributions}
The way in which local data is partitioned between the robots strongly influences the convergence rate of distributed deep learning methods. For example, in classification, problems where each robot has access to a subset of examples from all classes are easier to solve with distributed optimization than problems in which each robot only has access to labelled data for a single class. We refer to these two data distributions as \textit{homogenous} and \textit{heterogenous} respectively. In homogeneous classification a robot which optimizes directly on its local data set without communication may be able to achieve a relatively high validation accuracy. Whereas a robot in the heterogeneous case is unlikely to achieve a high accuracy on any class other than what it observed.

\section{Experiments}\label{Examples}
In the following three examples we demonstrate that DiNNO can be applied to a wide range of multi-robot learning applications, and demonstrates a substantial improvement against baseline distributed optimization algorithms. In each example we compare with two other common distributed optimization methods: DSGD and DSGT. We implement DiNNO, DSGD, and DSGT in a general framework such that for each experiment the optimization algorithm is unchanged, but a different objective function and data set are provided. Hyperparameter values are reported in Section \ref{Appendix}.
\subsection{MNIST Classification} \label{ex:mnist}
To clearly illustrate the potential for Algorithm \ref{alg:ours} to train a shared neural network from disparate data observers we first consider the well known MNIST classification problem \cite{lecun1998mnist}. In this problem a neural network must learn to classify images of handwritten digits to their correct integer values. We train a network composed of a convolutional layer with three 5x5 filters followed by 2 hidden linear layers of width $576, 64$ with ReLU activation and an output layer with log-softmax activation. We use the negative log-likelihood loss function. Each robot only has access to labelled digits from a single class. In this experiment, we perform distributed training with this heterogeneous data distribution on four different communication topologies: a complete graph, a cycle graph, and two random graphs. 

In Figure \ref{fig:mnist} we show the average, worst, and best Top-1 accuracy for each method on the distributed MNIST classification problem. Also included is the centralized result (98.5\% validation accuracy), and the individual results, which as expected have roughly 10\% validation accuracy. We observe that while DSGT quickly trains to good accuracy in these problems, DiNNO achieves much better final accuracy as training progresses with lower variance in later iterations. DSGD has relatively poor performance with high variance. The Fiedler value (algebraic connectivity) of each graph is reported to emphasize that DiNNO is capable of training to centralized accuracy with highly heterogeneous data distributions, and low graph connectivity.

\subsection{Neural Implicit Mapping} \label{ex:implicit}
In robotics and computer vision there is growing interest in using neural networks to represent functions which implicitly define the geometry and visual features of an environment \cite{nerf, imap}. In their basic form, implicit density field networks take as input an $(x,y,z)$ spatial coordinate and output a single density value between 0 and 1. Such networks are able to represent complicated 3D scenes in a single memory-efficient function. In this example we use DiNNO to learn the density field of a two dimensional environment where data collection and computation is distributed across multiple robots. The robots also have access to a global coordinate frame which enables cooperative mapping, but a future line of research would be to implement this same pipeline in conjunction with a distributed pose optimization algorithm.

The environment we seek to map is a 2D building floorplan environment from the CubiCasa5K data set \cite{kalervo2019cubicasa5k}. Seven robots are deployed, and each robot gathers data from the environment by collecting lidar scans as it traverses a closed loop, precomputed trajectory. To simulate data streaming the robots update their local networks at regular intervals from datasets of their last 400 collected lidar scans (an entire trajectory has 3000-4000 scans). Figure \ref{fig:true_map} shows the ground truth environment with seven robot paths and one lidar scan. There is some overlap in the locations traversed by each robot, but many locations, especially at the edges and corners of the map, are only viewed by a single robot.

\begin{figure}
    \includegraphics[width=\linewidth]{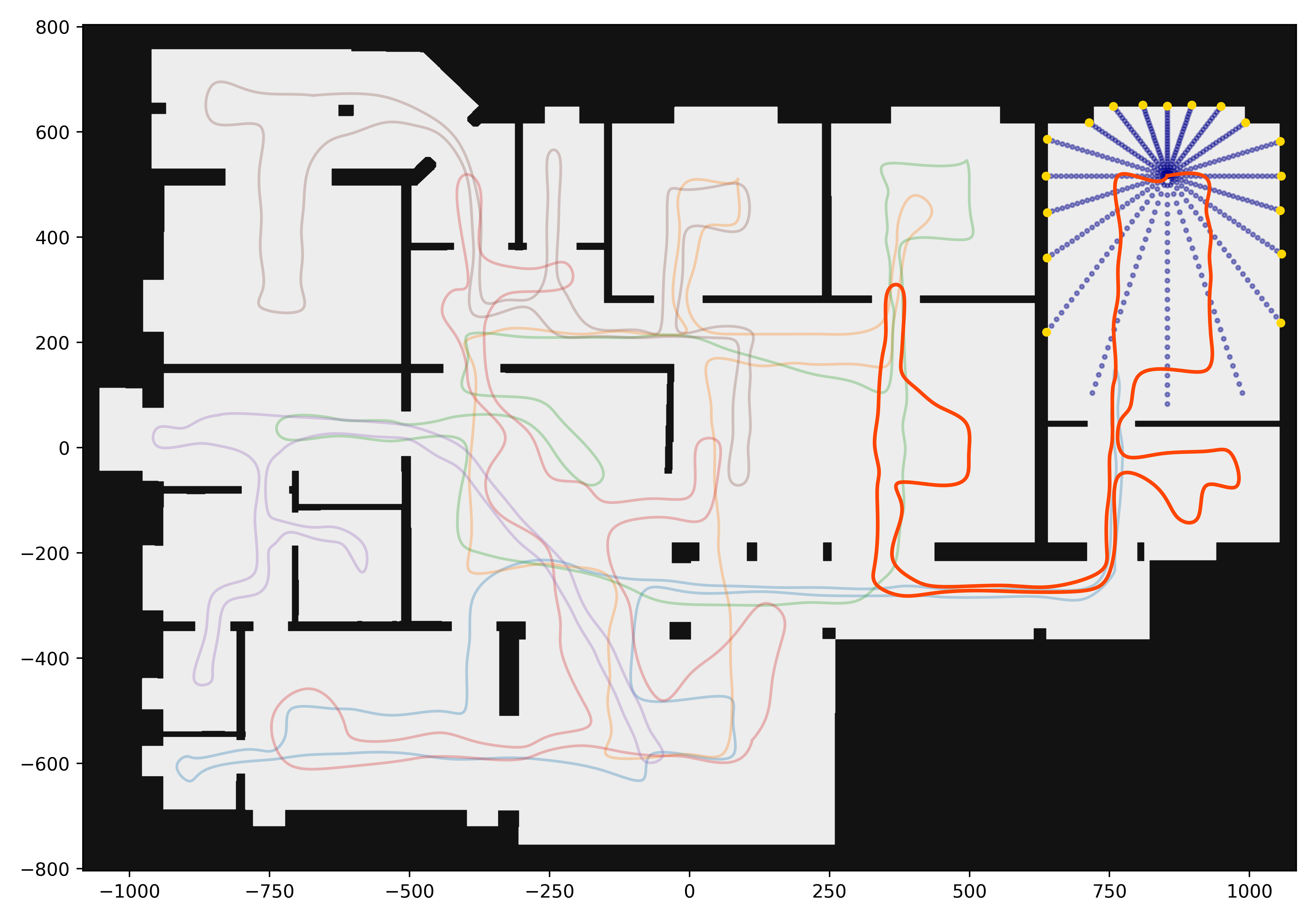} 
    \centering
    \caption{Ground truth map. Highlighted is a single robot's trajectory and a single lidar scan is shown with high (gold) and low (blue) density points.}
    \label{fig:true_map}
\end{figure}

We train a feedforward network with four hidden layers of size $256, 64, 64, 64$ where the first hidden layer has sinusoidal activation, the remaining hidden layers have ReLU activation, and the output layer has sigmoid activation to restrict our density estimates to $(0,1)$. The sinusoidal activation of the first hidden layer is common in implicit mapping networks and inspired by Fourier Feature Networks \cite{tancik2020fourier}. For a loss function we use the binary cross entropy between the sampled and predicted density. 

Our validation set is composed of novel lidar scans from uniformly sampled locations across the entire map, and this ensures that the validation loss reflects loss only on areas where the robots have been able to gather data (e.g. not inside walls). For the communication graph at each round an edge between two robots exists if their pairwise distance is below a certain threshold (1500 in this experiment). The constant motion of the robots results in a communication graph that is dynamic (in terms of edges), but always connected.

Figure \ref{fig:mapping_loss} shows the validation loss for our method as well as DSGD and DSGT. Of the tested methods DiNNO best minimizes the validation loss, once again approaching the performance of centralized training. DSGD and DSGT train less effectively, and converge to a poor quality solution.

\begin{figure}
    \includegraphics[width=\linewidth]{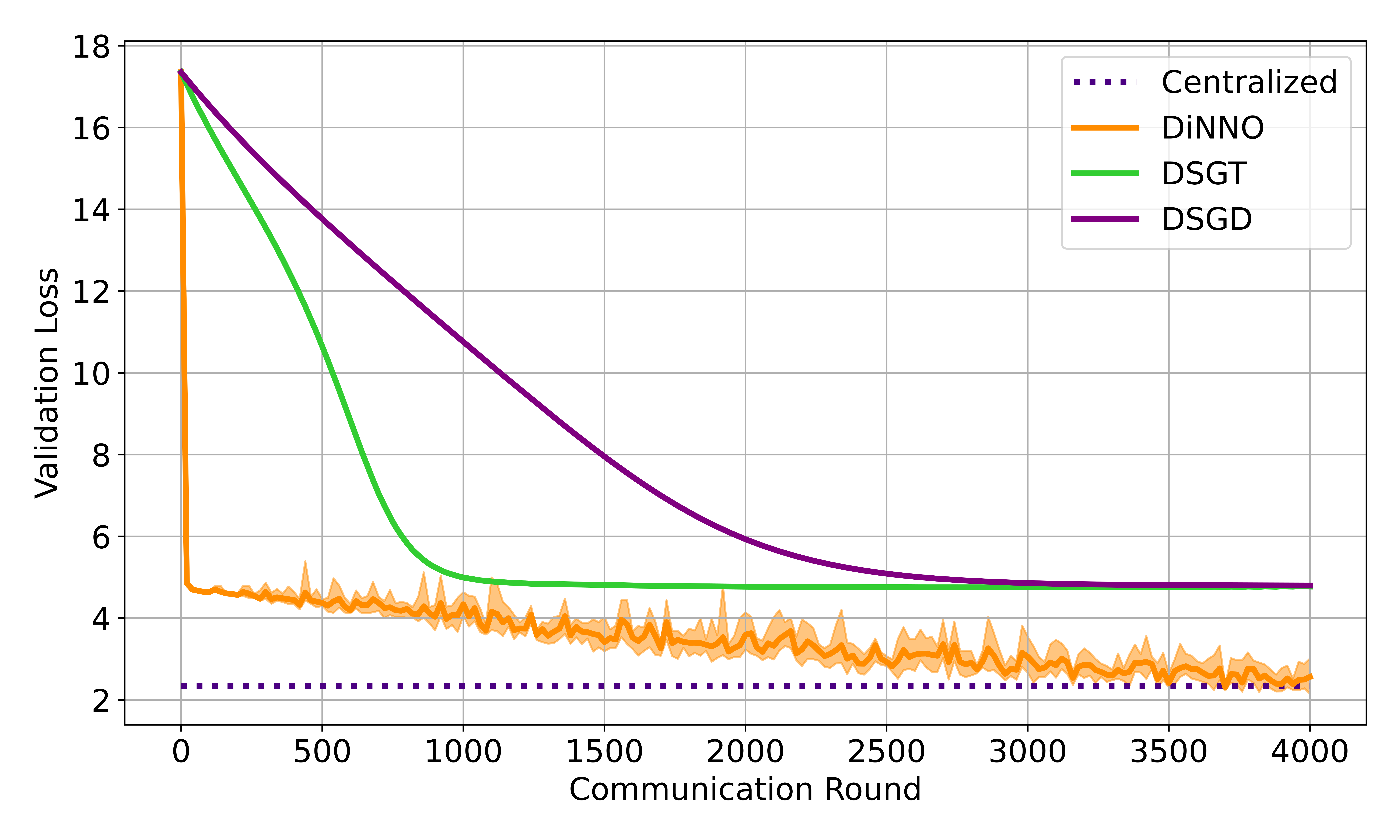}
    \centering
    \caption{Validation loss versus communication iteration for the neural implicit mapping experiment. Both baseline algorithms DSGT and DSGD appear to converge to a poor quality minima while DiNNO (ours) converges to a model with validation loss matching that of the centralized solution.} 
    \label{fig:mapping_loss}
\end{figure}

Figure \ref{fig:reconstruction} shows the map learned by each method as well as learned maps from individual robots training on only their own data. Each of the robots when using DiNNO is able to provide a faithful reconstruction of the ground truth environment while never having traversed its entirety or having received any raw data from other robots. When tested with DSGD and DSGT robots converge to similar local minima which result in incoherent maps. 

To verify the performances of DSGT and DSGD we reran this experiment several times, and both methods always converged to poor performing local minima. Additionally, we emphasize that the simulation code for these two methods is unchanged between this experiment and the MNIST one above where both methods are able to learn acceptable classifiers. We speculate that this is a challenging problem where only a small amount of suboptimality is allowable to achieve a useful representation, and DSGT and DSGD may be unable to either fine tune their neural network weights, escape poor local minima, or handle streaming data.

\begin{figure*}
    \includegraphics[width=\textwidth]{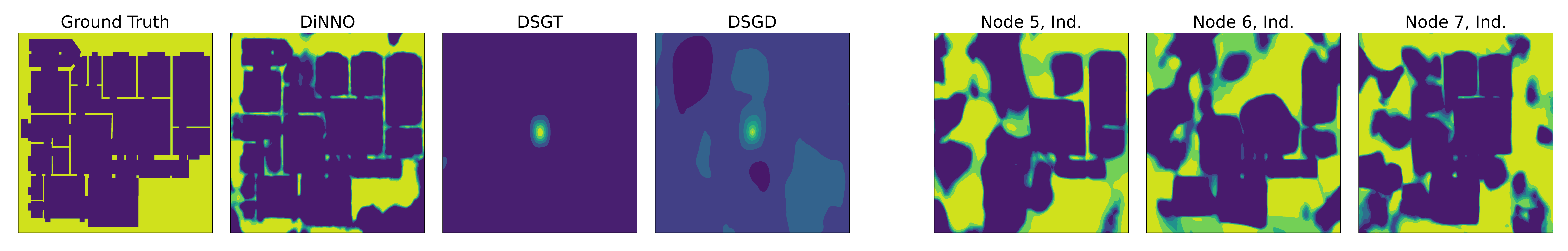} 
    \centering
    \caption{The left most plot shows the ground truth density map, and moving right the next three plots are the reconstructions from the neural implicit maps found by the three tested distributed algorithms. Here DiNNO is the only method that is able to learn a coherent map. The reconstructions were produced by querying the optimized (and agreed upon) networks on a grid mesh of points on the map. The last three plots show reconstructions produced from three of the seven robots when communication is not used (training exclusively on local data with Adam for 10 epochs). Since these robots do not have information from other areas on the map they are only able to reconstruct regions which those robots have traversed.}
    \label{fig:reconstruction}
\end{figure*}
\subsection{Multi-Agent Reinforcement Learning} \label{ex:marl}
For the final example we show that our algorithm can be used for deep multi-agent reinforcement learning (deep MARL). Specifically we use DiNNO for distributed learning of a decentralized policy applied to a standard continuous state and action, multi-robot, predator-prey problem that was first introduced in \cite{mpe}. MARL is known to be an especially hard learning task due to the inherent \textit{nonstationarity} of the environment. That is, the environment changes during learning because the other learning agents also have evolving policies. For background on MARL and a recent survey of deep MARL algorithms see \cite{param_sharing} and \cite{deep_marl}.

In our learning environment three robots must work together to pursue a faster evader robot in the presence of stationary randomly placed obstacles, as shown in Figure \ref{fig:tag}. Implemented in PettingZoo \cite{pettingzoo}, the environment operates according to the Actor Environment Cycle (AEC) Game model in which pursuers make observations, take actions, and receive rewards sequentially before the environment as a whole is updated. The pursuers are homogeneous with actions $a = $ \texttt{[none, right, left, up, down]} $\in \mathbb{R}^5 : \mathcal{A}$ and observations $o = $\texttt{[self\_vel, self\_pos, other\_pursuers\_rel\_pos, evader\_rel\_pos, evader\_rel\_vel]} $\in \mathbb{R}^{12} : \mathcal{O}$. Actions are clipped to be on the interval $[0,1]$. The evader obeys a heuristic policy, moving in the direction opposite of the position of the nearest pursuer. However, to prevent unfair evasion, the evader cannot propel itself outside a square of radius $1.2$. The reward function penalizes pursuing robots based on their distance from the evader, and pursuers receive a large positive reward for tagging (capturing) the evader.
\begin{figure}[ht!]
    \includegraphics[width=\linewidth]{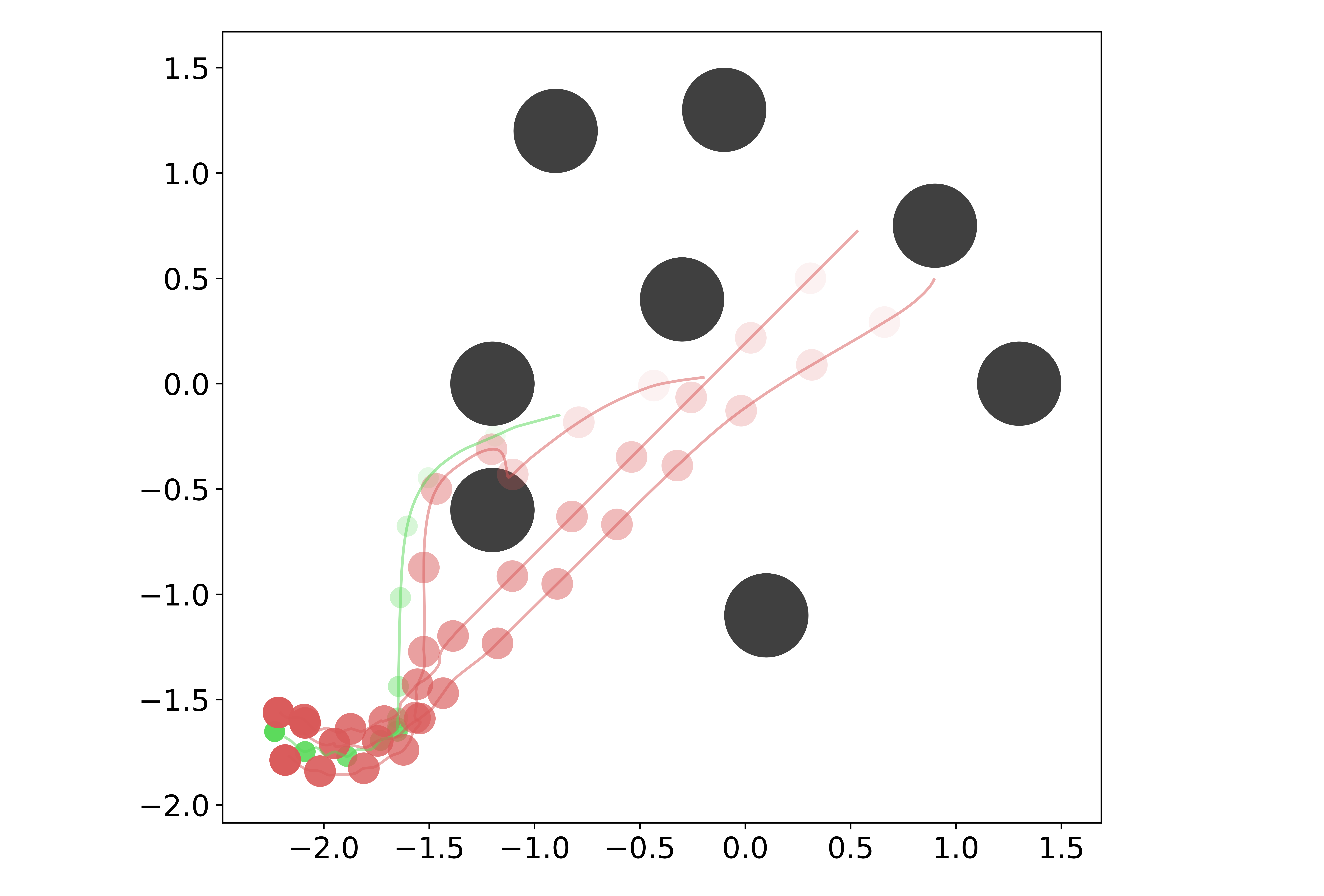} 
    \centering
    \caption{A decentralized policy rollout in the predator-prey environment. Pursuers (red), using policies learned with DiNNO applied to PPO, attempt to capture a faster evader (green) in the presence of obstacles (black).}
    \label{fig:tag}
\end{figure}

To solve this problem we extend the PPO algorithm \cite{ppo} with DiNNO to train a shared, decentralized policy. PPO uses an actor-critic scheme, and with DiNNO the policy (composed of two networks) can be optimized in a distributed way. At consensus the robots all converge to the same policy in accordance with a parameter sharing approach which has been shown to be effective for many MARL problems \cite{gupta_2017, param_sharing}. Typically the policies for parameter sharing are trained by some centralized compute node that aggregates the experiences of each of the robots. Applying DiNNO to PPO results in a relatively unexplored paradigm for MARL where both training and execution are fully distributed.

In this example the actor and critic networks are feedforward ReLU networks with 3 hidden layers of 64 neurons each. The robots simulate communication through a fully connected graph, and robots update their policies using individually collected data every 10 episodes. Results from this experiment are shown in Figure \ref{fig:RL_res}. For each training scenario (DiNNO, DSGD, DSGT, centralized) we show the mean of the average episodic reward achieved by the multi-robot predator team as training progresses. To verify training quality of each algorithm, we run the training scenario five times for each algorithm to generate the shown statistics. 

As it does in previous experiments DiNNO achieves the same average episodic reward as a policy trained using PPO with experiences aggregated from all three of the robots. Both DSGD and DSGT seem unable to learn a policy that results in positive episodic reward. 

\begin{figure}[ht!]
    \includegraphics[width=\linewidth]{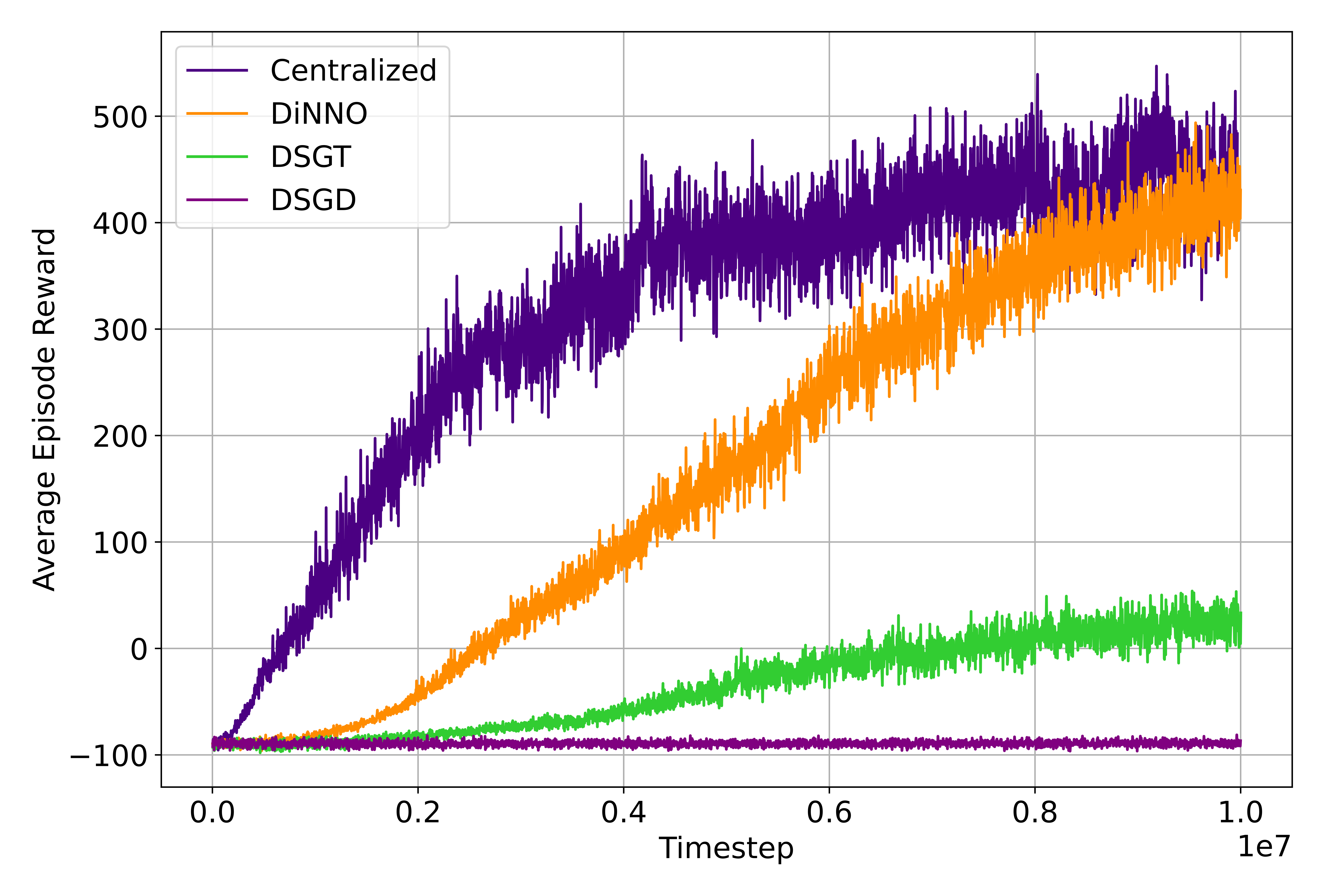} 
    \centering
    \caption{Episodic reward (averaged across 10 episodes per network update) vs environment time steps (summed across all episodes) for DiNNO, DSGD, DSGT, and centralized. Shown results are averaged across 5 training runs to show training robustness. DiNNO is the only algorithm to achieve good performance, matching centralized performance after 10 million time steps.}
    \label{fig:RL_res}
\end{figure}

\section{Conclusion}\label{Conclusion}
We presented the DiNNO algorithm that enables high performance distributed training of deep neural networks, and showcase its versatility on three diverse multi-robot learning tasks. In comparisons to existing distributed learning methods our algorithm consistently achieves better validation performance and converges to performance of centrally trained models. Directions for future work include learning neural implicit density functions from real 3D depth data, and exploring the capabilities of DiNNO for more complex distributed reinforcement learning tasks. 




\section*{Appendix}\label{Appendix}
\subsection{MNIST Experiments} \label{app:mnist}
Hyperparameters used across all four graphs were the same. DiNNO uses $B = 2$, $\rho_0 = 0.5$ increasing 0.3\% per communication round, and Adam as its primal optimizer with a log learning rate schedule (0.005 - 0.0005) for the primal update. DSGT uses $\alpha = 0.005$. DSGD uses a decaying stepsize following $\alpha^{k+1} = \alpha^{k} (1 - \mu \alpha^k)$ where $\alpha^0 = 0.005$ and $\mu = 0.001$. All methods use batch size 64.


\subsection{Neural Implicit Mapping Experiment}\label{app:implicit}
DiNNO uses $B = 5$, $\rho^0 = 0.1$ increasing 0.3\% per communication round, batch size of 10,000, and Adam with a log learning rate schedule of (0.001 - 0.0001). DSGT uses $\alpha = 0.001$, and a batch size of 20,000. DSGD uses $\alpha^0 = 0.001$, $\mu = 0.001$, and a batch size of 20,000.

\subsection{Multi-agent Reinforcement Learning}\label{app:marl}
For each algorithm we use the following hyperparameters: 200 steps per episode, 2000 steps between actor/critic network updates, reward discount factor $\gamma=0.99$, and PPO clipping parameter $0.2$. We allow each algorithm 5 gradient steps ($B=5$) per batch of data to update actor and critic networks. The actor learning rates for DiNNO, DSGD, and DSGT are $0.0003$, $0.001$, and $0.01$, respectively. Only DSGT has a separate critic learning rate, $0.00001$, due to exploding gradients otherwise. For DiNNO we set a constant $\rho = 1.0$.



\bibliographystyle{./IEEEtran} 
\bibliography{./IEEEabrv,./IEEEexample}

\begin{thebibliography}{10}
\providecommand{\url}[1]{#1}
\csname url@rmstyle\endcsname
\providecommand{\newblock}{\relax}
\providecommand{\bibinfo}[2]{#2}
\providecommand\BIBentrySTDinterwordspacing{\spaceskip=0pt\relax}
\providecommand\BIBentryALTinterwordstretchfactor{4}
\providecommand\BIBentryALTinterwordspacing{\spaceskip=\fontdimen2\font plus
\BIBentryALTinterwordstretchfactor\fontdimen3\font minus
  \fontdimen4\font\relax}
\providecommand\BIBforeignlanguage[2]{{%
\expandafter\ifx\csname l@#1\endcsname\relax
\typeout{** WARNING: IEEEtran.bst: No hyphenation pattern has been}%
\typeout{** loaded for the language `#1'. Using the pattern for}%
\typeout{** the default language instead.}%
\else
\language=\csname l@#1\endcsname
\fi
#2}}

\bibitem{boyd2011distributed}
S.~Boyd, N.~Parikh, and E.~Chu, \emph{Distributed optimization and statistical
  learning via the alternating direction method of multipliers}.\hskip 1em plus
  0.5em minus 0.4em\relax Now Publishers Inc, 2011.

\bibitem{EU}
B.~Custers, A.~M. Sears, F.~Dechesne, I.~Georgieva, T.~Tani, and S.~Van~der
  Hof, \emph{EU Personal Data Protection in Policy and Practice}, 03 2019.

\bibitem{shi2020neural}
G.~Shi, W.~H{\"o}nig, Y.~Yue, and S.-J. Chung, ``Neural-swarm: Decentralized
  close-proximity multirotor control using learned interactions,'' in
  \emph{International Conference on Robotics and Automation}.\hskip 1em plus
  0.5em minus 0.4em\relax IEEE, 2020.

\bibitem{shi2021neural}
G.~Shi, W.~H{\"o}nig, X.~Shi, Y.~Yue, and S.-J. Chung, ``Neural-swarm2:
  Planning and control of heterogeneous multirotor swarms using learned
  interactions,'' \emph{IEEE Transactions on Robotics}, 2021.

\bibitem{chen2017decentralized}
Y.~F. Chen, M.~Liu, M.~Everett, and J.~P. How, ``Decentralized
  non-communicating multiagent collision avoidance with deep reinforcement
  learning,'' in \emph{IEEE International Conference on Robotics and
  Automation}.\hskip 1em plus 0.5em minus 0.4em\relax IEEE, 2017.

\bibitem{yahya2017collective}
A.~Yahya, A.~Li, M.~Kalakrishnan, Y.~Chebotar, and S.~Levine, ``Collective
  robot reinforcement learning with distributed asynchronous guided policy
  search,'' in \emph{IEEE/RSJ International Conference on Intelligent Robots
  and Systems}.\hskip 1em plus 0.5em minus 0.4em\relax IEEE, 2017, pp. 79--86.

\bibitem{sartoretti2019distributed}
G.~Sartoretti, Y.~Wu, W.~Paivine, T.~S. Kumar, S.~Koenig, and H.~Choset,
  ``Distributed reinforcement learning for multi-robot decentralized collective
  construction,'' in \emph{Distributed autonomous robotic systems}.\hskip 1em
  plus 0.5em minus 0.4em\relax Springer, 2019, pp. 35--49.

\bibitem{luo2018adaptive}
W.~Luo and K.~Sycara, ``Adaptive sampling and online learning in multi-robot
  sensor coverage with mixture of gaussian processes,'' in \emph{2018 IEEE
  International Conference on Robotics and Automation (ICRA)}.\hskip 1em plus
  0.5em minus 0.4em\relax IEEE, 2018, pp. 6359--6364.

\bibitem{habibi2021human}
G.~Habibi and J.~P. How, ``Human trajectory prediction using similarity-based
  multi-model fusion,'' \emph{IEEE Robotics and Automation Letters}, vol.~6,
  no.~2, pp. 715--722, 2021.

\bibitem{tsitsiklis1984problems}
J.~N. Tsitsiklis, ``Problems in decentralized decision making and
  computation.'' Massachusetts Inst of Tech Cambridge Lab for Information and
  Decision Systems, Tech. Rep., 1984.

\bibitem{nedic2009distributed}
A.~Nedic and A.~Ozdaglar, ``Distributed subgradient methods for multi-agent
  optimization,'' \emph{IEEE Transactions on Automatic Control}, vol.~54,
  no.~1, pp. 48--61, 2009.

\bibitem{shi2015extra}
W.~Shi, Q.~Ling, G.~Wu, and W.~Yin, ``Extra: An exact first-order algorithm for
  decentralized consensus optimization,'' \emph{SIAM Journal on Optimization},
  vol.~25, no.~2, pp. 944--966, 2015.

\bibitem{nedic2014distributed}
A.~Nedi{\'c} and A.~Olshevsky, ``Distributed optimization over time-varying
  directed graphs,'' \emph{IEEE Transactions on Automatic Control}, vol.~60,
  no.~3, pp. 601--615, 2014.

\bibitem{hosseini2016online}
S.~Hosseini, A.~Chapman, and M.~Mesbahi, ``Online distributed convex
  optimization on dynamic networks,'' \emph{IEEE Transactions on Automatic
  Control}, vol.~61, no.~11, pp. 3545--3550, 2016.

\bibitem{halsted2021survey}
T.~Halsted, O.~Shorinwa, J.~Yu, and M.~Schwager, ``A survey of distributed
  optimization methods for multi-robot systems,'' \emph{arXiv preprint
  arXiv:2103.12840}, 2021.

\bibitem{nedic2018network}
A.~Nedi{\'c}, A.~Olshevsky, and M.~G. Rabbat, ``Network topology and
  communication-computation tradeoffs in decentralized optimization,''
  \emph{Proceedings of the IEEE}, vol. 106, no.~5, pp. 953--976, 2018.

\bibitem{yang2019survey}
T.~Yang, X.~Yi, J.~Wu, Y.~Yuan, D.~Wu, Z.~Meng, Y.~Hong, H.~Wang, Z.~Lin, and
  K.~H. Johansson, ``A survey of distributed optimization,'' \emph{Annual
  Reviews in Control}, 2019.

\bibitem{lian2017can}
X.~Lian, C.~Zhang, H.~Zhang, C.-J. Hsieh, W.~Zhang, and J.~Liu, ``Can
  decentralized algorithms outperform centralized algorithms? a case study for
  decentralized parallel stochastic gradient descent,'' in \emph{Proceedings of
  the 31st International Conference on Neural Information Processing Systems},
  2017, pp. 5336--5346.

\bibitem{pu2021distributed}
S.~Pu and A.~Nedi{\'c}, ``Distributed stochastic gradient tracking methods,''
  \emph{Mathematical Programming}, vol. 187, no.~1, pp. 409--457, 2021.

\bibitem{lu2019gnsd}
S.~Lu, X.~Zhang, H.~Sun, and M.~Hong, ``Gnsd: A gradient-tracking based
  nonconvex stochastic algorithm for decentralized optimization,'' in
  \emph{2019 IEEE Data Science Workshop (DSW)}.\hskip 1em plus 0.5em minus
  0.4em\relax IEEE, 2019, pp. 315--321.

\bibitem{koloskova2019decentralized}
A.~Koloskova, T.~Lin, S.~U. Stich, and M.~Jaggi, ``Decentralized deep learning
  with arbitrary communication compression,'' in \emph{International Conference
  on Learning Representations}, 2019.

\bibitem{di2016parallel}
P.~Di~Lorenzo and S.~Scardapane, ``Parallel and distributed training of neural
  networks via successive convex approximation,'' in \emph{2016 IEEE 26th
  International Workshop on Machine Learning for Signal Processing
  (MLSP)}.\hskip 1em plus 0.5em minus 0.4em\relax IEEE, 2016, pp. 1--6.

\bibitem{edge_consensus}
\BIBentryALTinterwordspacing
K.~Niwa, N.~Harada, G.~Zhang, and W.~B. Kleijn, \emph{Edge-Consensus Learning:
  Deep Learning on P2P Networks with Nonhomogeneous Data}.\hskip 1em plus 0.5em
  minus 0.4em\relax New York, NY, USA: Association for Computing Machinery,
  2020, p. 668–678. [Online]. Available:
  \url{https://doi.org/10.1145/3394486.3403109}
\BIBentrySTDinterwordspacing

\bibitem{ling2015dlm}
Q.~Ling, W.~Shi, G.~Wu, and A.~Ribeiro, ``Dlm: Decentralized linearized
  alternating direction method of multipliers,'' \emph{IEEE Transactions on
  Signal Processing}, vol.~63, no.~15, pp. 4051--4064, 2015.

\bibitem{chang2020distributed}
T.-H. Chang, M.~Hong, H.-T. Wai, X.~Zhang, and S.~Lu, ``Distributed learning in
  the nonconvex world: From batch data to streaming and beyond,'' \emph{IEEE
  Signal Processing Magazine}, vol.~37, no.~3, pp. 26--38, 2020.

\bibitem{chang2014multi}
T.-H. Chang, M.~Hong, and X.~Wang, ``Multi-agent distributed optimization via
  inexact consensus admm,'' \emph{IEEE Transactions on Signal Processing},
  vol.~63, no.~2, pp. 482--497, 2014.

\bibitem{pytorch}
A.~Paszke, S.~Gross, F.~Massa, A.~Lerer, J.~Bradbury, G.~Chanan, T.~Killeen,
  Z.~Lin, N.~Gimelshein, L.~Antiga, A.~Desmaison, A.~Kopf, E.~Yang, Z.~DeVito,
  M.~Raison, A.~Tejani, S.~Chilamkurthy, B.~Steiner, L.~Fang, J.~Bai, and
  S.~Chintala, ``Pytorch: An imperative style, high-performance deep learning
  library,'' in \emph{Advances in Neural Information Processing Systems 32},
  H.~Wallach, H.~Larochelle, A.~Beygelzimer, F.~d\textquotesingle
  Alch\'{e}-Buc, E.~Fox, and R.~Garnett, Eds.\hskip 1em plus 0.5em minus
  0.4em\relax Curran Associates, Inc., 2019, pp. 8024--8035.

\bibitem{kingma2014adam}
D.~P. Kingma and J.~Ba, ``Adam: A method for stochastic optimization,''
  \emph{arXiv preprint arXiv:1412.6980}, 2014.

\bibitem{gd_convergence}
H.~Karimi, J.~Nutini, and M.~Schmidt, ``Linear convergence of gradient and
  proximal-gradient methods under the polyak-{\l}ojasiewicz condition,'' in
  \emph{Joint European Conference on Machine Learning and Knowledge Discovery
  in Databases}.\hskip 1em plus 0.5em minus 0.4em\relax Springer, 2016, pp.
  795--811.

\bibitem{convergence}
W.~Shi, Q.~Ling, K.~Yuan, G.~Wu, and W.~Yin, ``On the linear convergence of the
  admm in decentralized consensus optimization,'' \emph{IEEE Transactions on
  Signal Processing}, vol.~62, no.~7, pp. 1750--1761, 2014.

\bibitem{lecun1998mnist}
Y.~LeCun, ``The mnist database of handwritten digits,'' \emph{http://yann.
  lecun. com/exdb/mnist/}, 1998.

\bibitem{nerf}
B.~Mildenhall, P.~P. Srinivasan, M.~Tancik, J.~T. Barron, R.~Ramamoorthi, and
  R.~Ng, ``Nerf: Representing scenes as neural radiance fields for view
  synthesis,'' in \emph{European conference on computer vision}.\hskip 1em plus
  0.5em minus 0.4em\relax Springer, 2020, pp. 405--421.

\bibitem{imap}
E.~Sucar, S.~Liu, J.~Ortiz, and A.~J. Davison, ``imap: Implicit mapping and
  positioning in real-time,'' 2021.

\bibitem{kalervo2019cubicasa5k}
A.~Kalervo, J.~Ylioinas, M.~H{\"a}iki{\"o}, A.~Karhu, and J.~Kannala,
  ``Cubicasa5k: A dataset and an improved multi-task model for floorplan image
  analysis,'' in \emph{Scandinavian Conference on Image Analysis}.\hskip 1em
  plus 0.5em minus 0.4em\relax Springer, 2019, pp. 28--40.

\bibitem{tancik2020fourier}
M.~Tancik, P.~P. Srinivasan, B.~Mildenhall, S.~Fridovich-Keil, N.~Raghavan,
  U.~Singhal, R.~Ramamoorthi, J.~T. Barron, and R.~Ng, ``Fourier features let
  networks learn high frequency functions in low dimensional domains,''
  \emph{NeurIPS}, 2020.

\bibitem{mpe}
R.~Lowe, Y.~Wu, A.~Tamar, J.~Harb, P.~Abbeel, and I.~Mordatch, ``Multi-agent
  actor-critic for mixed cooperative-competitive environments,'' \emph{Neural
  Information Processing Systems (NIPS)}, 2017.

\bibitem{param_sharing}
J.~K. Terry, N.~Grammel, A.~Hari, L.~Santos, and B.~Black, ``Revisiting
  parameter sharing in multi-agent deep reinforcement learning,'' 2021.

\bibitem{deep_marl}
G.~Papoudakis, F.~Christianos, L.~Sch{\"a}fer, and S.~V. Albrecht,
  ``Benchmarking multi-agent deep reinforcement learning algorithms in
  cooperative tasks,'' in \emph{Thirty-fifth Conference on Neural Information
  Processing Systems Datasets and Benchmarks Track (Round 1)}, 2021.

\bibitem{pettingzoo}
J.~K. Terry, B.~Black, N.~Grammel, M.~Jayakumar, A.~Hari, R.~Sulivan,
  L.~Santos, R.~Perez, C.~Horsch, C.~Dieffendahl, N.~L. Williams, Y.~Lokesh,
  R.~Sullivan, and P.~Ravi, ``Pettingzoo: Gym for multi-agent reinforcement
  learning,'' \emph{arXiv preprint arXiv:2009.14471}, 2020.

\bibitem{ppo}
J.~Schulman, F.~Wolski, P.~Dhariwal, A.~Radford, and O.~Klimov, ``Proximal
  policy optimization algorithms,'' 2017.

\bibitem{gupta_2017}
J.~K. Gupta, M.~Egorov, and M.~Kochenderfer, ``Cooperative multi-agent control
  using deep reinforcement learning,'' in \emph{International Conference on
  Autonomous Agents and Multiagent Systems}.\hskip 1em plus 0.5em minus
  0.4em\relax Springer, 2017, pp. 66--83.

\end{thebibliography}

\end{document}